\documentclass[final,5p,times]{elsarticle}
\usepackage{amssymb,amsmath,amsfonts,amsthm,mathrsfs}
\usepackage{algorithm}
\usepackage{algpseudocodex}
\usepackage{booktabs}
\usepackage{xcolor}

\theoremstyle{plain}
\newtheorem{theorem}{Theorem}
\newtheorem{lemma}{Lemma}

\theoremstyle{remark}

\journal{Knowledge-Based Systems}

% comment this or K-BS
\makeatletter \def\ps@pprintTitle{ \let\@oddhead\@empty \let\@evenhead\@empty \def\@oddfoot{\today\hfil} \let\@evenfoot\@oddfoot}
\makeatother

\begin{document}

\begin{frontmatter}

\title{\LARGE Projected random forests and conformal prediction of circular data}

\author[1]{Paulo C. Marques F.\corref{cor1}}
\ead{PauloCMF1@insper.edu.br}

\author[1]{Rinaldo Artes}
\ead{RinaldoA@insper.edu.br}

\author[3]{Helton Graziadei}
\ead{helton@ufscar.br}

\affiliation[1]{
  organization = {Insper Institute of Education and Research},
  % addressline = {Rua Quatá, 300},
  city = {São Paulo},
  postcode = {04546-042}, 
  state = {SP},
  country = {Brazil}
}

\affiliation[3]{
  organization = {Department of Statistics, Federal University of São Carlos},
  % addressline = {Rodovia Washington Luís, km 235},
  city = {São Carlos},
  postcode = {13565-905}, 
  state = {SP},
  country = {Brazil}
}

\cortext[cor1]{Corresponding author}

\begin{abstract}
We apply conformal prediction techniques to regression problems with circular responses, producing prediction sets with adaptive arc length and finite-sample coverage guarantees for any circular predictive model under the assumption of data exchangeability. Leveraging the high performance of existing predictive models designed for linear responses, we analyze a general projection procedure that converts any linear-response regression model into one suitable for circular responses. When random forests are used as base models in this projection procedure, we leverage the random forest out-of-bag mechanism to eliminate the need for a separate calibration sample in the construction of prediction sets. On synthetic and real datasets, the resulting projected random forest model produces more efficient out-of-bag conformal prediction sets, with shorter median arc length, than the split conformal prediction sets generated by two existing alternative models.
\end{abstract}

%\begin{highlights}
%  \item A conformity score is proposed for conformal prediction with circular responses
%  \item The resulting prediction sets have adaptive arc length and finite-sample marginal coverage
%  \item A projection procedure adapts standard regression models to circular regression tasks
%  \item Projected random forests enable out-of-bag conformalization without a calibration sample
%  \item Projected random forests produce shorter prediction intervals with empirical coverage close to the nominal level
%\end{highlights}

\begin{keyword}
Circular Regression \sep Prediction Sets \sep Conformal Prediction \sep Projected Random Forests \sep Out-of-bag Conformal Prediction.
\end{keyword}

\end{frontmatter}

\section{Introduction}\label{sec:intro}

Circular variables are used to represent attributes in a variety of domains, including the occurrence time of events \cite{gill2010,leguia2021}, the orientation of phenomena in biological \cite{ali2023,landler2018}, meteorological \cite{lang2020}, ecological \cite{otieno2006,fitak2017,ranalli2020}, and geological \cite{lark2014} applications, alongside measures of psychological traits based on the interpersonal circumplex \cite{cremers2021}. What distinguishes these attributes is that their values are naturally represented as angles on a periodic scale, which makes standard linear methods inadequate and motivates the development of specialized statistical concepts and tools \cite{mardia2000,pewsey2021}.

In this study, we use conformal prediction techniques \cite{vovk2005} to quantify the confidence associated with predictions made by models designed for regression problems with circular responses. We also examine a general projection procedure that transforms any predictive model originally designed for linear responses into one suitable for circular settings. When applied to random forests \cite{breiman2001}, this projection procedure allows us to use the out-of-bag conformalization technique introduced in \cite{johansson2014}, which eliminates the need for a separate calibration sample.

We proceed as follows. In Section \ref{sec:circdata}, we review the statistical notions involved in modeling circular variables. In Section \ref{sec:cp}, we discuss the necessary conformal prediction concepts and then show in Section \ref{sec:score} how split conformal prediction \cite{papadopoulos2002} is implemented in our setting by proposing a suitable circular conformity score that produces model-free prediction sets with finite-sample coverage guarantees for exchangeable data. The general projection procedure used to build circular predictive models from existing methods for linear responses is discussed in Section \ref{sec:prf} and applied to random forests. The resulting projected random forest model is conformalized in a data-splitting-free manner by exploiting the availability of out-of-bag predictions for each training observation. We provide theoretical support for this out-of-bag conformal prediction procedure in Appendix B. In Section \ref{sec:experiments}, we compare the out-of-bag conformal prediction sets produced by the projected random forest model with the split conformal prediction sets generated by two alternatives: the parametric projected normal linear regression model \cite{presnell1998} and the semi-parametric circular forest \cite{lang2020}. On both synthetic and real datasets, the projected random forest model produces more efficient prediction sets, with shorter median arc length, over a batch of test predictions. We present our final remarks in Section \ref{sec:concl}, discussing future work and providing pointers to our open source \texttt{R} \cite{R} code and the data needed to reproduce all the presented examples.

% \color{blue}

\section{Circular data}\label{sec:circdata}

The distinguishing characteristic of a circular variable is that it can be represented as an angle or, equivalently, as a vector on the unit circle. This type of variable includes a wide range of measurements with a natural periodic structure, from inherently angular quantities, such as the wind direction at a particular geographical location, to variables that record the time at which events occur. The cyclical nature of circular variables requires us to rethink even the most basic descriptive concepts. For instance, the average time of occurrence for events observed at 23:00 and 1:00 is not 12:00, as a linear average would suggest, but rather midnight. The same issue arises with the example of wind directions: observations at $350^\circ$, $10^\circ$, $30^\circ$, and $50^\circ$ have average direction $20^\circ$, a direction slightly east of north, rather than the misleading linear average of $110^\circ$.

Let $\mathbf{atan2}:\mathbb{R}^2\setminus\{(0,0)\}\to[0,2\pi)$ be a two-argument arctangent, returning the polar angle in $[0,2\pi)$ associated with the vector $(c,s)$, defined by:
\begin{equation*}
    \mathbf{atan2}(c,s) =
    \begin{cases}
      \arctan(s/c) &\text{if} \;\; c > 0 \; \text{and} \; s \ge 0; \\
      \arctan(s/c) + \pi &\text{if} \;\; c < 0; \\
      \arctan(s/c) + 2\pi &\text{if} \;\; c > 0 \;\text{and} \; s < 0; \\
      \pi/2 &\text{if} \;\; c = 0 \;\text{and} \; s > 0; \\
      3\pi/2 &\text{if} \;\; c = 0 \;\text{and} \; s < 0.
    \end{cases}
\end{equation*}

Suppose that we have observed at a specific location a sample of wind directions represented as angles $y_1,\dots,y_n \in [0,2\pi)$. A natural notion of mean wind direction associated with this sample is the circular sample mean, defined by summing the corresponding unit vectors $(\cos(y_i),\sin(y_i))$ and taking the polar angle of the resulting vector:
\[
  \bar{y}_\text{circular} = \mathbf{atan2}\left(\sum_{i=1}^n \cos(y_i),\sum_{i=1}^n\sin(y_i)\right),
\]
with the convention that this circular sample mean is not defined when the resulting vector obtained from the vector sum is the zero vector. Moreover, the length of the vector sum of these unit vectors provides a measure of sample concentration. If all observations are equal, the sample has maximal homogeneity and the resultant length is $n$. Conversely, if the angles $y_i$ are evenly spread around the circle, the sample has maximal dispersion and the resultant length is close to zero, reaching zero in the case of a perfectly symmetric configuration of the unit vectors.

There are several probability distributions suitable for modeling a circular random variable \cite{mardia2000}. The von Mises distribution plays a central role in the field and is often regarded as a circular analogue of the normal distribution \cite{gordon1977}. A circular random variable $Y\in[0,2\pi)$ following a von Mises distribution with mean direction $\theta$ and concentration parameter $\kappa>0$ has probability density function
\begin{equation*}
  f_Y(y)=\frac{\exp(\kappa \cos(y-\theta))}{2 \pi I_0(\kappa)},
\end{equation*}
for $0\leq y<2\pi$, in which $I_0$ is the modified Bessel function of the first kind of order $0$. As $\kappa$ increases, the von Mises distribution becomes increasingly concentrated around $\theta$. When $\kappa$ approaches zero, it converges to the circular uniform distribution, for which no direction is favored. It can be shown \cite{mardia2000} that, for a random sample from a von Mises distribution, the maximum likelihood estimator of the mean direction $\theta$ is the circular sample mean defined above. In Section \ref{sec:experiments}, we use the von Mises distribution in the definition of the data-generating process for our synthetic example.

\section{Conformal prediction}\label{sec:cp}

Conformal prediction \cite{vovk2005} is a framework for turning the output of a predictive model into a prediction set with a finite-sample coverage guarantee. Its main advantage is that this guarantee does not require the predictive model to be correctly specified. Instead, conformal prediction relies on a distributional symmetry assumption on the data to calibrate prediction sets around the predictions produced by any fitted model, making it particularly well suited to use with modern machine learning methods, which can achieve strong empirical performance but usually do not provide a direct and reliable quantification of predictive uncertainty. Here, we focus on split conformal prediction \cite{papadopoulos2002}, also known as the inductive form of conformal prediction.

A sequence of random objects $\{O_i\}_{i\geq 1}$ is \textit{exchangeable} if, for every $n\geq 1$ and every permutation $\pi$ of $\{1,\dots,n\}$, the random tuples $(O_1,\dots,O_n)$ and $(O_{\pi(1)},\dots,O_{\pi(n)})$ have the same distribution.

For our purposes, consider a regression setting in which the available data is randomly split into training and calibration samples, defining the data sequence
\begin{equation*}\label{eq:seq}
  \underbrace{(X'_1,Y'_1),\dots,(X'_m,Y'_m)}_\text{training},\underbrace{(X_1,Y_1),\dots,(X_n,Y_n)}_\text{calibration},\underbrace{(X_{n+1},Y_{n+1}),\dots}_\text{future}
\end{equation*}
with feature vectors $X'_i,X_i\in\mathbb{R}^d$ and response variables $Y'_i,Y_i\in\mathbb{R}$. We model the random pairs in this data sequence as being exchangeable.

Using the information in the training sample, we construct a \textit{conformity function} $\rho:\mathbb{R}^d\times\mathbb{R}\to\mathbb{R}$ whose role is to compare, within the exchangeable data sequence, predictions made from feature vectors with the corresponding response values.

For example, we may first learn a regression function $\hat\mu:\mathbb{R}^d\to\mathbb{R}$ from a realization of the training sample $\{(X'_i,Y'_i)\}_{i=1}^m$, and then define the conformity function as the absolute residual:  $\rho(x,y)=|y-\hat{\mu}(x)|$.

The distinctive theoretical feature of split conformal prediction follows from the following simple result, proved in Appendix A.

\begin{lemma}\label{lmm:order}
Let $U_1,U_2,\dots,U_n,U_{n+1}$ be exchangeable random variables, and let $V_{(1)}\leq V_{(2)}\leq\dots\leq V_{(n)}$ denote the order statistics of the subset $\{U_1,U_2,\dots,U_n\}$. For each ${k=1,\dots,n}$, it follows that $P(U_{n+1}\leq V_{(k)})\geq k/(n+1)$.
\end{lemma}

Let $\lceil t\rceil=\min\{k\in\mathbb{Z}:t\le k\}$ denote the ceiling of $t\in\mathbb{R}$. For a given conformity function $\rho$, define the conformity scores $R_i=\rho(X_i,Y_i)$, for $i\ge 1$. The assumption of an exchangeable data sequence implies that the full set of conformity scores $\{R_1,R_2,\dots,R_n,R_{n+1}\}$ is exchangeable. Choose a nominal miscoverage level $0<\alpha<1$ such that $\lceil(1-\alpha)(n+1)\rceil \leq n$, and denote by $R_{(1)}\leq R_{(2)}\le\dots\le R_{(n)}$ the order statistics of the calibration scores $\{R_1,R_2,\dots,R_n\}$. Introducing the notation $\hat{r} = R_{\left(\lceil (1-\alpha)(n+1)\rceil\right)}$, a direct application of Lemma \ref{lmm:order} with $k=\lceil(1-\alpha)(n+1)\rceil$ entails that
\begin{equation}\label{eq:mvp}
  \Pr(Y_{n+1}\in C^{(\alpha)}_n(X_{n+1})) \ge 1 - \alpha,
\end{equation}
in which the prediction set $C^{(\alpha)}_n(x)=\{y\in\mathbb{R}:\rho(x,y)\le \hat r\}$, with $x\in\mathbb{R}^d$. 

Property (\ref{eq:mvp}) is referred to as the marginal validity of conformal prediction sets. Intuitively, the conformalization process works like this: the calibration sample scores assess the out-of-sample predictive capacity of the model $\hat{\mu}$ through the conformity function $\rho$, and the distributional symmetry implied by the data exchangeability assumption transfers this assessment to the future observables, for which we become able to construct prediction sets with coverage guarantees without additional assumptions. In the next section, we introduce a conformity function suitable for regression problems with circular response variables.

% \color{black}

\section{Conformity score for circular predictions}\label{sec:score}

Throughout this section, we follow the notations introduced in Section \ref{sec:cp}. We now consider the case in which the response variables are circular and expressed in radians, so that $Y'_i,Y_i\in[0,2\pi)$. The absolute difference between two given angles $\theta,\phi\in[0,2\pi)$ is measured through the angular distance
\begin{align*}
  d(\theta,\phi) &= \min\{|\theta-\phi|,2\pi-|\theta-\phi|\} \\
  &= \pi-|\pi-|\theta-\phi||\in[0,\pi],
\end{align*}
which gives the length of the shortest arc between $\theta$ and $\phi$ in the unit circle.

From the information in the training sample $\{(X'_i,Y'_i)\}_{i=1}^m$, we build a first predictive model $\hat{\mu}:\mathbb{R}^d\to[0,2\pi)$. Defining the circular training residuals $\Delta'_i = \pi-|\pi-|Y'_i-\hat{\mu}(X'_i)||$, a second positive variability model $\hat{\sigma}:\mathbb{R}^d\to(0,\pi]$ is constructed from the information in $\{(X'_i,\Delta'_i)\}_{i=1}^m$. We then define the conformity function as the scaled absolute circular residual:
\[ \rho(x,y)=\frac{\pi-|\pi-|y-\hat{\mu}(x)||}{\hat{\sigma}(x)} .\]
Consequently, the conformity scores for calibration and future sample units are given by
\[
  R_i = \rho(X_i,Y_i) = \frac{\pi-|\pi-|Y_i-\hat{\mu}(X_i)||}{\hat{\sigma}(X_i)}, \qquad \text{for $i\geq 1$.} 
\]
The presence of the variability model $\hat{\sigma}$ in the conformity score denominator yields prediction sets with variable arc length for a batch of future observables, making the conformalization procedure more adaptable to the values of the features \cite{papadopoulos2002} and helping with issues of conditional coverage as discussed at the end of Section \ref{sec:experiments}.

It follows from the marginal validity property (\ref{eq:mvp}) that
\[
  P\left(Y_{n+1} \in C^{(\alpha)}_n(X_{n+1})\right) \ge 1 - \alpha,
\]
in which, for $x\in\mathbb{R}^d$, the conformal prediction set is given by
\[
  C^{(\alpha)}_n(x) = 
  \begin{cases}
    \left[\hat{\mu}(x)-\hat{r}\cdot\hat{\sigma}(x),\, \hat{\mu}(x)+\hat{r}\cdot\hat{\sigma}(x)\right], & \text{if} \;\;\; \hat{r}\cdot\hat{\sigma}(x) < \pi; \\
    [0, 2\pi), & \;\;\; \text{otherwise}.
  \end{cases}
\]

\section{Projected random forests}\label{sec:prf}

The availability of high-performance open source implementations of contemporary machine learning methods for regression problems with linear response motivates the development of a general projection procedure that allows such methods to be used as building blocks in problems of circular regression. In this projection procedure, any predictive method designed for real-valued responses is applied separately to the cosine and sine projections of the circular response variable values, and predictions are made by projecting the resulting two-dimensional output back to $[0,2\pi)$.

Formally, we construct two predictive models with linear responses,
\[
  \hat{\mu}_c:\mathbb{R}^d\to\mathbb{R} \qquad\text{and}\qquad \hat{\mu}_s:\mathbb{R}^d\to\mathbb{R},
\]
using, respectively, the realizations of the projected training samples
\[
  \{(X'_i,\cos(Y'_i))\}_{i=1}^m \qquad\text{and}\qquad \{(X'_i,\sin(Y'_i))\}_{i=1}^m.
\]
Given a feature vector $x\in\mathbb{R}^d$, these two models produce the two-dimensional prediction $(\hat{\mu}_c(x),\hat{\mu}_s(x))$. The projected predictive model $\hat{\mu}:\mathbb{R}^d\to[0,2\pi)$ is then defined by taking the polar angle of this vector: $\hat{\mu}(x) = \mathbf{atan2}\left(\hat{\mu}_c(x),\hat{\mu}_s(x)\right)$.

% \color{blue}

A natural question is whether this projection procedure produces an actual predictive gain when compared to a regression mo\-del designed for linear responses forced to operate in a circular response setting. This comparison is especially transparent for random forests \cite{breiman2001}. Individual regression trees make predictions by averaging the response values associated with the training observations falling in each terminal node. A regression random forest, in turn, averages the predictions of the individual trees built during the bootstrap process. Consequently, by construction, a regression random forest cannot predict values outside the range of the response values observed in the training sample. Thus, when trained directly on angles encoded in $[0,2\pi)$, a regression random forest will also produce predictions in $[0,2\pi)$.

For the wind direction dataset discussed in Section \ref{ssec:wind}, we compared the test error of a standard regression random forest with that of projected random forests, that is, the predictive model built from two standard random forests using the projection procedure discussed above. We split the available data into training and test samples, with 75\% of the observations used for training and 25\% used for testing. Using the average angular distance between predicted and observed angles in the test sample as a measure of predictive performance, we observe a substantial performance gain in favor of the projected random forests model. In particular, the standard random forest yielded an average angular distance of 0.759, whereas the projected random forest reduced this value to 0.551. This corresponds to a relative reduction of approximately 27.4\% in the average angular prediction error. This highlights the predictive benefit of accounting for the circular nature of the response through the projection procedure.

% \color{black}

The bootstrap process involved in the construction of a random forest allows each training observation to be compared with predictions made by trees that did not use its information during training. When a bootstrap sample of size $n$ is drawn from a training sample of size $n$, any particular observation is not selected with probability $(1-1/n)^n$, which is approximately $e^{-1}\approx 36.8\%$ for large $n$. Thus, in a random forest, each training observation is typically absent from the bootstrap samples used to construct approximately $36.8\%$ of the trees (a fraction humorously referred to by statisticians and machine learners as ``approximately one third''). For one such observation, these trees form a random subforest, whose predictions can be averaged to produce a prediction based only on models trained without access to the observation in question. We say that the observation stayed out-of-bag for the trees in this subforest, and the resulting prediction is its out-of-bag prediction.

In conformal prediction, this out-of-bag mechanism can be leveraged to compute calibration scores from the training sample itself, replacing the predictions made for the separate calibration sample by out-of-bag predictions. This out-of-bag conformal prediction procedure was first proposed in \cite{johansson2014} and has recently been successfully applied in actuarial problems \cite{graziadei2025}. In Appendix B, we give theoretical results for the out-of-bag conformal prediction procedure in a general setting, beyond the circular problems discussed in the paper.

Algorithm \ref{algo:prf} synthesizes the two main ideas discussed in this section, formalizing an out-of-bag conformalization procedure based on the projected predictive model built from two standard random forests.

\begin{algorithm}[t!]
\caption{Out-of-bag conformal prediction using projected random forests}\label{algo:prf}
\begin{algorithmic}[1]
  \Require Dataset $\{(x_i,y_i)\}_{i=1}^n$, with $x_i\in\mathbb{R}^d$ and $y_i\in[0,2\pi)$, number $B$ of trees used to train each random forest, random seed $\tau\in\mathbb{N}$, future vector of predictors $x_{n+1}\in\mathbb{R}^d$, and nominal miscoverage level $0<\alpha<1$.
  \Ensure Prediction set.
  \Statex
  \State Set the random seed to $\tau$
  \State Train random forest $\{\hat{\mu}^{(j)}_c\}_{j=1}^B$ from $\{(x_i,\cos(y_i))\}_{i=1}^n$
  \State Reset the random seed to $\tau$
  \State Train random forest $\{\hat{\mu}^{(j)}_s\}_{j=1}^B$ from $\{(x_i,\sin(y_i))\}_{i=1}^n$
  \For{$i=1 \textrm{ to } n$}
    \State $\mathcal{O}_i \gets \{ j : \text{$i$-th sample unit $\notin$ $j$-th bootstrap sample}\}$
    \State {\footnotesize $\delta_i \gets \pi - \left|\pi - \left|y_i-\mathbf{atan2}\!\left(\frac{1}{|\mathcal{O}_i|}\sum_{j\in\mathcal{O}_i}\hat{\mu}^{(j)}_c(x_i),\,\frac{1}{|\mathcal{O}_i|}\sum_{j\in\mathcal{O}_i}\hat{\mu}^{(j)}_s(x_i)\right)\right|\right|$ }
  \EndFor
  \State Reset the random seed to $\tau$
  \State Train random forest $\{\hat{\sigma}^{(j)}_c\}_{j=1}^B$ from $\{(x_i,\cos(\delta_i))\}_{i=1}^n$
  \State Reset the random seed to $\tau$
  \State Train random forest $\{\hat{\sigma}^{(j)}_s\}_{j=1}^B$ from $\{(x_i,\sin(\delta_i))\}_{i=1}^n$
  \For{$i=1 \textrm{ to } n$}
    \State $r_i \gets \delta_i \;\Big/\; \mathbf{atan2}\!\left(\frac{1}{|\mathcal{O}_i|}\sum_{j\in\mathcal{O}_i}\hat{\sigma}^{(j)}_c(x_i),\,\frac{1}{|\mathcal{O}_i|}\sum_{j\in\mathcal{O}_i}\hat{\sigma}^{(j)}_s(x_i)\right)$
  \EndFor
  \State $\hat{y}_{n+1} \gets \mathbf{atan2}\!\left(\frac{1}{B}\sum_{j=1}^B\hat{\mu}^{(j)}_c(x_{n+1}),\,\frac{1}{B}\sum_{j=1}^B\hat{\mu}^{(j)}_s(x_{n+1})\right)$
  \State {\footnotesize $\epsilon \gets r_{(\lceil(1-\alpha)(n+1)\rceil)} \times \mathbf{atan2}\!\left(\frac{1}{B}\sum_{j=1}^B\hat{\sigma}^{(j)}_c(x_{n+1}),\,\frac{1}{B}\sum_{j=1}^B\hat{\sigma}^{(j)}_s(x_{n+1})\right)$ }
  \If{$\epsilon<\pi$}
    \State\Return $[\hat{y}_{n+1} - \epsilon, \hat{y}_{n+1} + \epsilon]$
  \Else
    \State\Return $[0,2\pi)$
  \EndIf
\end{algorithmic}
\end{algorithm}

\section{Experiments with synthetic and real data}\label{sec:experiments}

In this section, we compare the split conformal prediction sets produced by the projected normal linear model and the circular forest (described in Sections \ref{ssec:pnlm} and \ref{ssec:cf}, respectively) to the out-of-bag conformal prediction sets generated by the projected random forests model described in Algorithm \ref{algo:prf}.

\subsection{Projected normal linear model}\label{ssec:pnlm}

The projected normal linear model \cite{presnell1998} introduces a set of real parameters and defines the response variable as the angle generated by a unit vector. This unit vector is constructed by normalization of a vector following a bivariate normal distribution, with mean vector components defined by linear combinations involving the model parameters and the explanatory variables, and with an identity covariance matrix. The model parameters are estimated by maximum likelihood. See \cite{presnell1998} for the technical details.

\subsection{Circular forest}\label{ssec:cf}

The circular forest model \cite{lang2020} is a bagging \cite{breiman1996} of circular trees. Each circular tree is built by a recursive partitioning process involving the von Mises distribution discussed in Section \ref{sec:circdata}. The process begins by fitting a von Mises distribution to the responses of the full training sample and generating a scoring matrix that measures how well the von Mises distribution fits each training sample unit. This scoring matrix is then tested for dependencies with each explanatory variable using permutation tests, thereby selecting one explanatory variable for the split. The best split point is found by maximizing a score discrepancy between the two resulting subgroups. This procedure is recursively applied to create further splits until a stopping criterion, such as minimal node size or lack of significant dependency, is met.

\subsection{Synthetic data}\label{ssec:synth}

The synthetic data generating process is the following. We simulate ten independent predictors $x_1,\dots,x_{10}$ from a $U[-1,1]$ distribution. The response variable $y\in[0,2\pi)$ is drawn from a von Mises distribution with mean direction
\[
  2\arctan\!\left(x_1 - 2x_2 + x_1x_2 - 2x_3^2\right)+\pi
\]
and concentration parameter $\kappa>0$. The predictors $x_4,\dots,x_{10}$ are introduced to create a degree of sparsity in the simulated dataset.

In this experiment, the training, calibration, and test samples each contain 10,000 observations. Figure \ref{fig:synthpolar} depicts the circular histogram for the response variable in the training sample, generated with concentration parameter $\kappa=5$. When using Algorithm \ref{algo:prf}, we enlarge the training sample by adding to it the calibration sample, since the algorithm does not need a separate calibration sample to produce the prediction sets. The corresponding prediction intervals for fifty test sample units, produced by the three different methods, using a miscoverage level $\alpha=0.1$, are shown in Figure \ref{fig:synthint}. In this figure we unfolded the prediction intervals to facilitate the arc length comparisons between the different models. For the whole test sample, Table \ref{tab:synth} gives the median and interquartile range of the arc lengths, and the empirical coverage for a batch of conformal prediction sets produced by the three different methods, for data generated with concentration parameters $\kappa\in\{1,2,5,10\}$, using a nominal miscoverage level $\alpha=0.1$. In all cases, Algorithm \ref{algo:prf} produces more efficient prediction intervals, with smaller median arc length.

\begin{figure}[t]
\centering
\includegraphics[width=5cm]{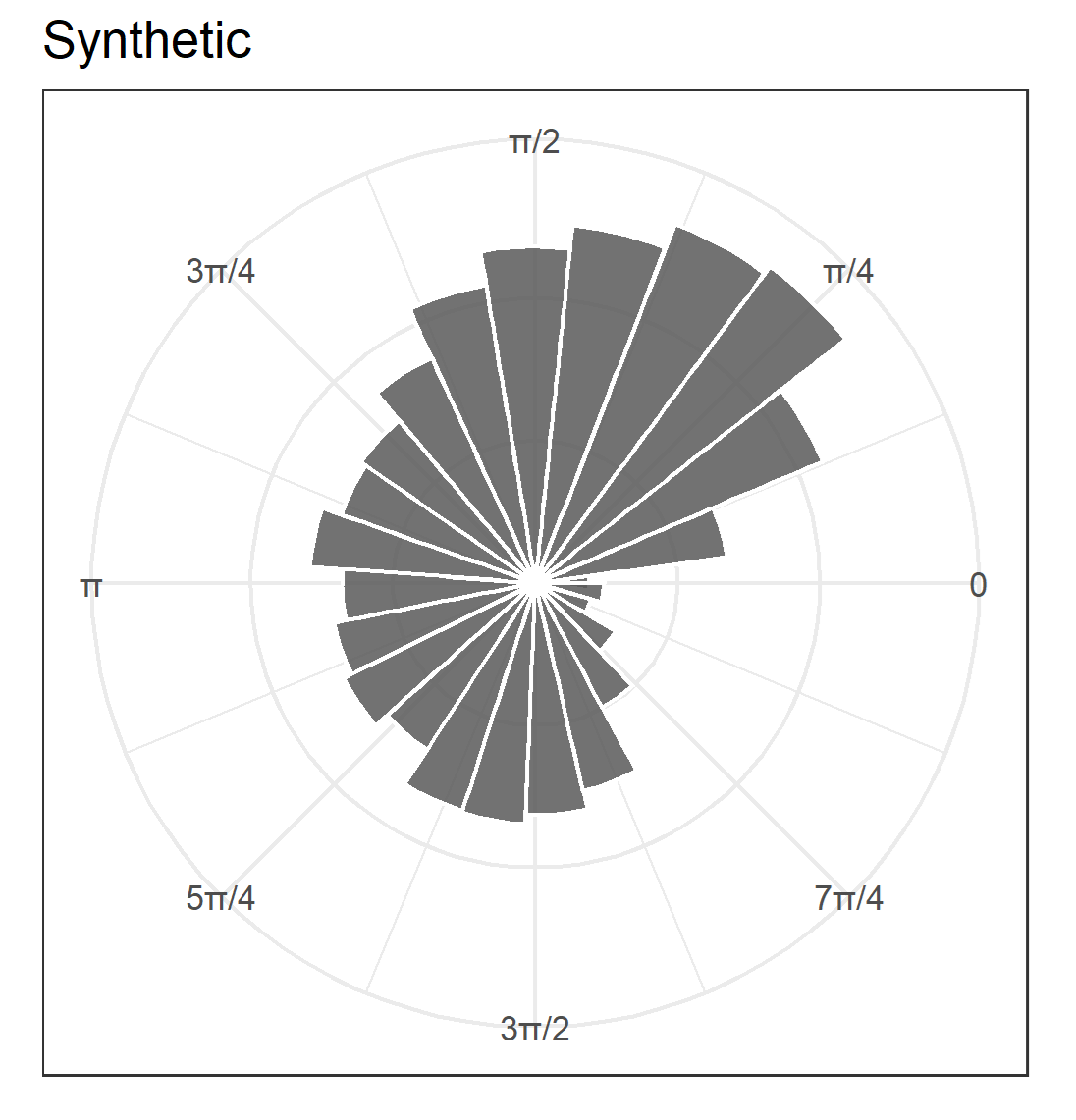} % 8cm
\caption{Circular histogram of the response variable in the synthetic dataset training sample, with concentration parameter $\kappa=5$.}
\label{fig:synthpolar}
\end{figure}

\begin{figure*}[ht]
\centering
\includegraphics[width=18cm]{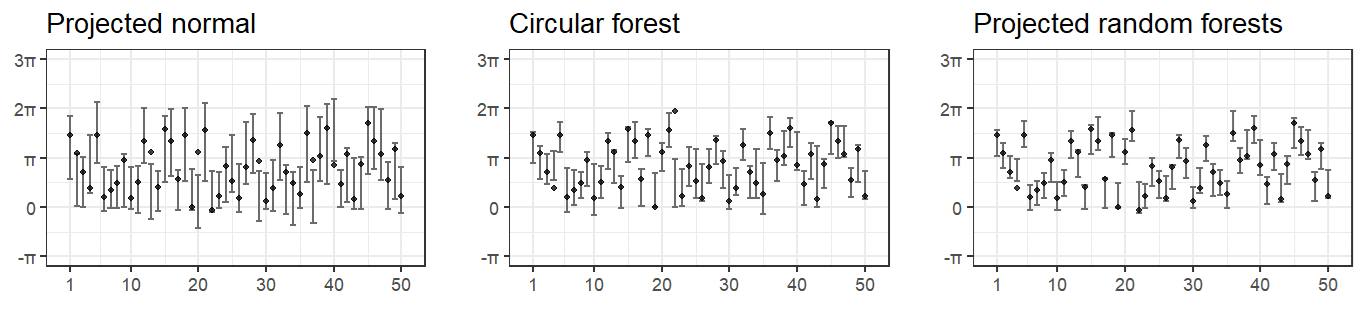}
\caption{Conformal prediction intervals for fifty test sample units in the synthetic dataset, with $\kappa=5$, produced by the three different methods, using a miscoverage level $\alpha=0.1$. The black dots are the observed circular responses.}
\label{fig:synthint}
\end{figure*}

\begin{table*}[ht]
\caption{Test sample median and interquartile range of the arc lengths, and empirical coverage for a batch of conformal prediction sets produced by the three different methods for synthetic data generated with different concentration parameters $\kappa$, using nominal miscoverage level $\alpha=0.1$.}
\medskip
\centering
\footnotesize
{\renewcommand{\arraystretch}{1.25}
\begin{tabular}{cccccccccc}
\cline{2-10}
                              & \multicolumn{3}{c|}{Projected normal}           & \multicolumn{3}{c|}{Circular forest}            & \multicolumn{3}{c}{Projected random forests} \\ \hline
\multicolumn{1}{c|}{$\kappa$} & Median & IQR    & \multicolumn{1}{c|}{Coverage} & Median & IQR    & \multicolumn{1}{c|}{Coverage} & Median       & IQR          & Coverage       \\ \hline
1                             & 4.90   & (0.74) & 90.5\%                        & 4.84   & (1.06) & 90.6\%                        & 4.51         & (0.71)       & 90.8\%         \\
2                             & 4.05   & (0.87) & 90.2\%                        & 3.52   & (0.85) & 90.4\%                        & 2.96         & (0.41)       & 90.1\%         \\
5                             & 3.40   & (1.20) & 90.4\%                        & 2.09   & (0.48) & 89.9\%                        & 1.70         & (0.25)       & 90.0\%         \\
10                            & 3.21   & (1.35) & 89.6\%                        & 1.70   & (0.62) & 90.1\%                        & 1.24         & (0.21)       & 89.9\%         \\ \hline
\end{tabular}}
\label{tab:synth}
\end{table*}

\subsection{Wind direction data}\label{ssec:wind}

We compiled hourly wind direction data from a meteorological station located in the Central-West region of Brazil, taking a sample through the period from January 1st, 2012, to July 30th, 2023. The raw data is publicly available at:
\begin{center}
  \texttt{\small https://tempo.inmet.gov.br/TabelaEstacoes/A001}
\end{center}
Table \ref{tab:windvars} gives a description of the available variables.

We have training, calibration, and test sample units of sizes 10,000, 5,000, and 5,000, respectively. For a discussion of calibration sample size selection, see \cite{marquesf2025a}. Figure \ref{fig:windpolar} shows the circular histogram of the response variable in the training sample. Again, Algorithm \ref{algo:prf} allows us to enlarge the training sample by adding to it the calibration sample. The prediction intervals for fifty test sample units, produced by the three different methods, using a miscoverage level $\alpha=0.1$, are shown in Figure \ref{fig:windint}. In this figure we unfolded the prediction intervals to ease the arc length comparisons between the different models. Figure \ref{fig:clockswind} displays the prediction intervals in a more traditional way. For the whole test sample, Table \ref{tab:wind} gives the median and interquartile range of the arc lengths, and the empirical coverage of the conformal prediction sets produced by the three different methods, using a nominal miscoverage level $\alpha=0.1$. Again, Algorithm \ref{algo:prf} outperforms the alternatives, producing prediction intervals with smaller median arc length.

\begin{table}[ht]
\small
\caption{Variables in the wind direction dataset.}
\medskip
\centering
{\renewcommand{\arraystretch}{1.25}
\begin{tabular}{lc}
\hline
Variable & Unit \\
\hline
Wind direction & rad \\
Cosine of wind direction in the previous hour & dimensionless \\ 
Sine of wind direction in the previous hour & dimensionless \\  
Total precipitation in the previous hour & mm \\ 
Atmospheric pressure in the previous hour & mB \\
Air temperature (dry bulb) in the previous hour	& $^\circ\mathrm{C}$ \\
Dew point temperature in the previous hour & $^\circ\mathrm{C}$ \\
Relative humidity in the previous hour & \% \\
Wind gust in the previous hour & m/s \\
Wind speed in the previous hour	& m/s \\
\hline
\end{tabular}}
\label{tab:windvars}
\end{table}

\begin{figure}[ht]
\centering
\includegraphics[width=5cm]{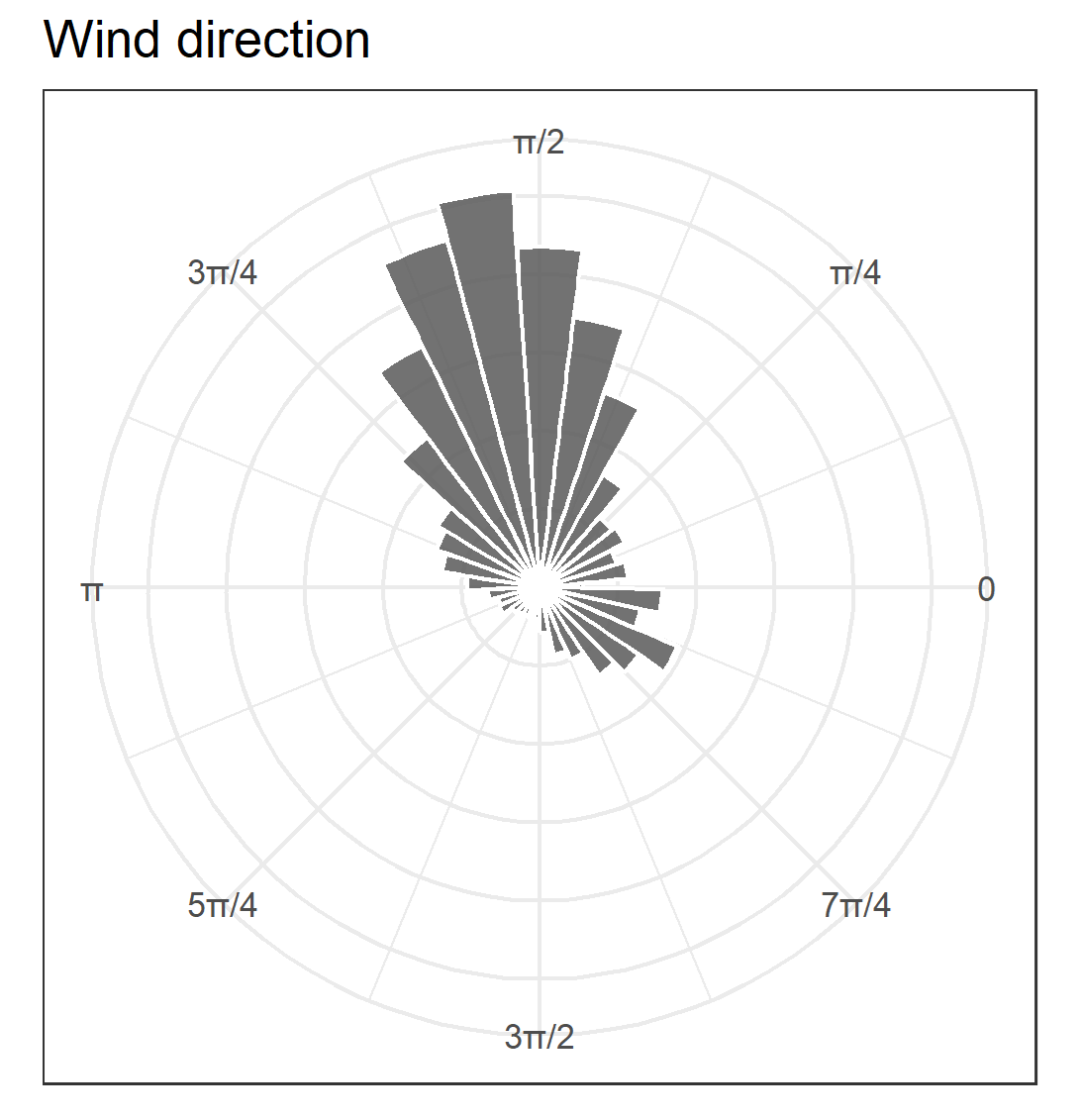}
\caption{Circular histogram of the response variable in the training sample of the wind direction dataset.}
\label{fig:windpolar}
\end{figure}

\begin{figure*}[ht]
\centering
\includegraphics[width=18cm]{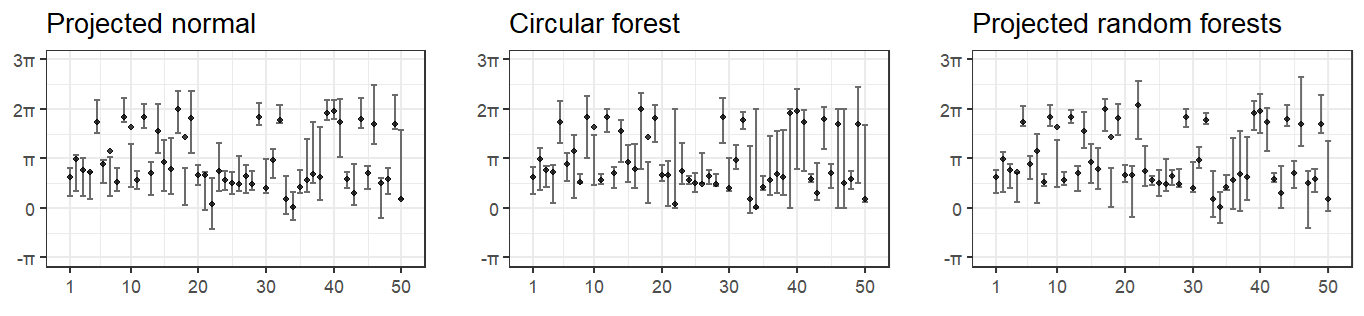}
\caption{Prediction intervals for fifty test sample units in the wind direction dataset, produced by the three different methods, using a miscoverage level $\alpha=0.1$. The black dots are the observed wind directions.}
\label{fig:windint}
\end{figure*}

\begin{table}[ht]
\caption{Median and interquartile range of the arc lengths, and empirical coverage for a batch of conformal prediction sets produced by the three different methods for the wind direction dataset, using nominal miscoverage level $\alpha=0.1$.}
\medskip
\centering
\small
{\renewcommand{\arraystretch}{1.25}
\begin{tabular}{lccc}
\cline{2-4}
                         & Median & IQR  & Coverage \\ \hline
Projected normal         & 2.04   & 1.39 & 89.2\%   \\
Circular forest          & 2.23   & 3.07 & 90.2\%   \\
Projected random forests & 1.90   & 1.87 & 89.5\%   \\ \hline
\end{tabular}}
\label{tab:wind}
\end{table}

\begin{figure*}[ht]
\centering
\includegraphics[width=14cm]{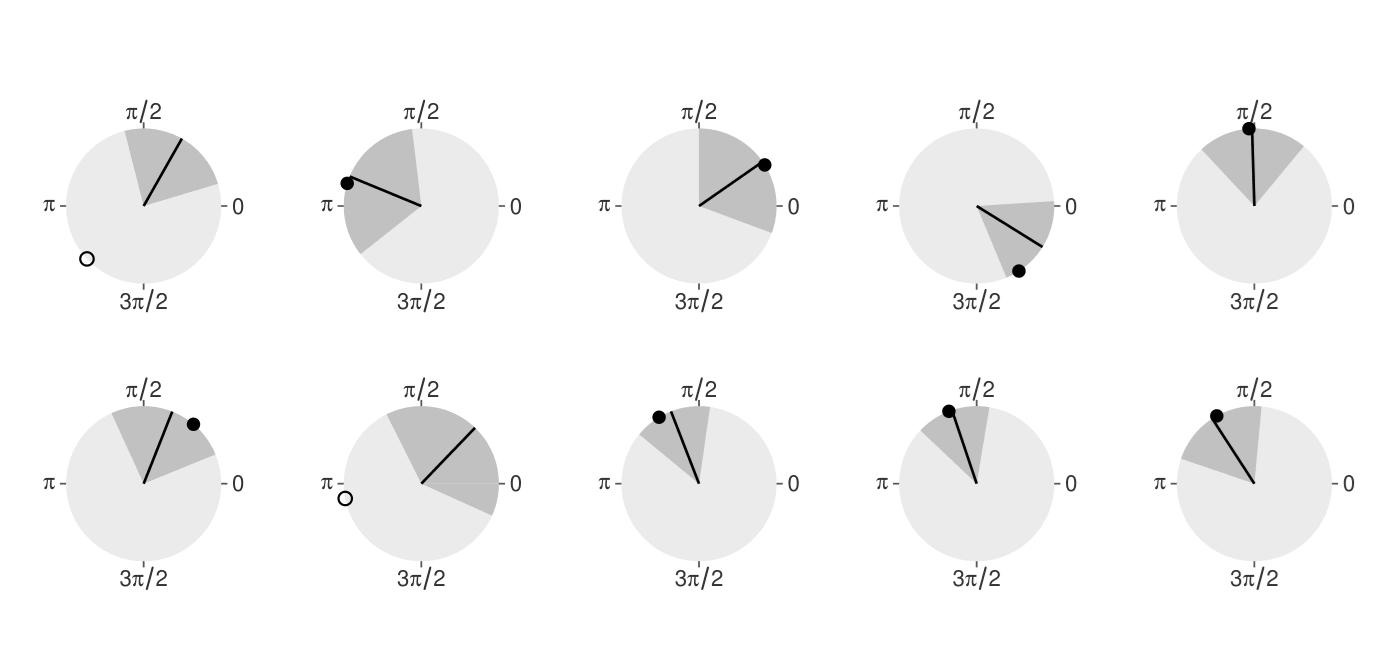}
\caption{Circular prediction intervals for wind direction obtained with the projected random forest model. Each polar diagram represents one test observation. The dark-gray sector indicates the conformal prediction interval, the radial black line represents the point prediction, and the point on the circumference corresponds to the observed wind direction. Filled points indicate observations covered by the prediction interval, while hollow points indicate observations not covered by it.}
\label{fig:clockswind}
\end{figure*}

\begin{figure}[ht]
\centering
\includegraphics[width=8cm]{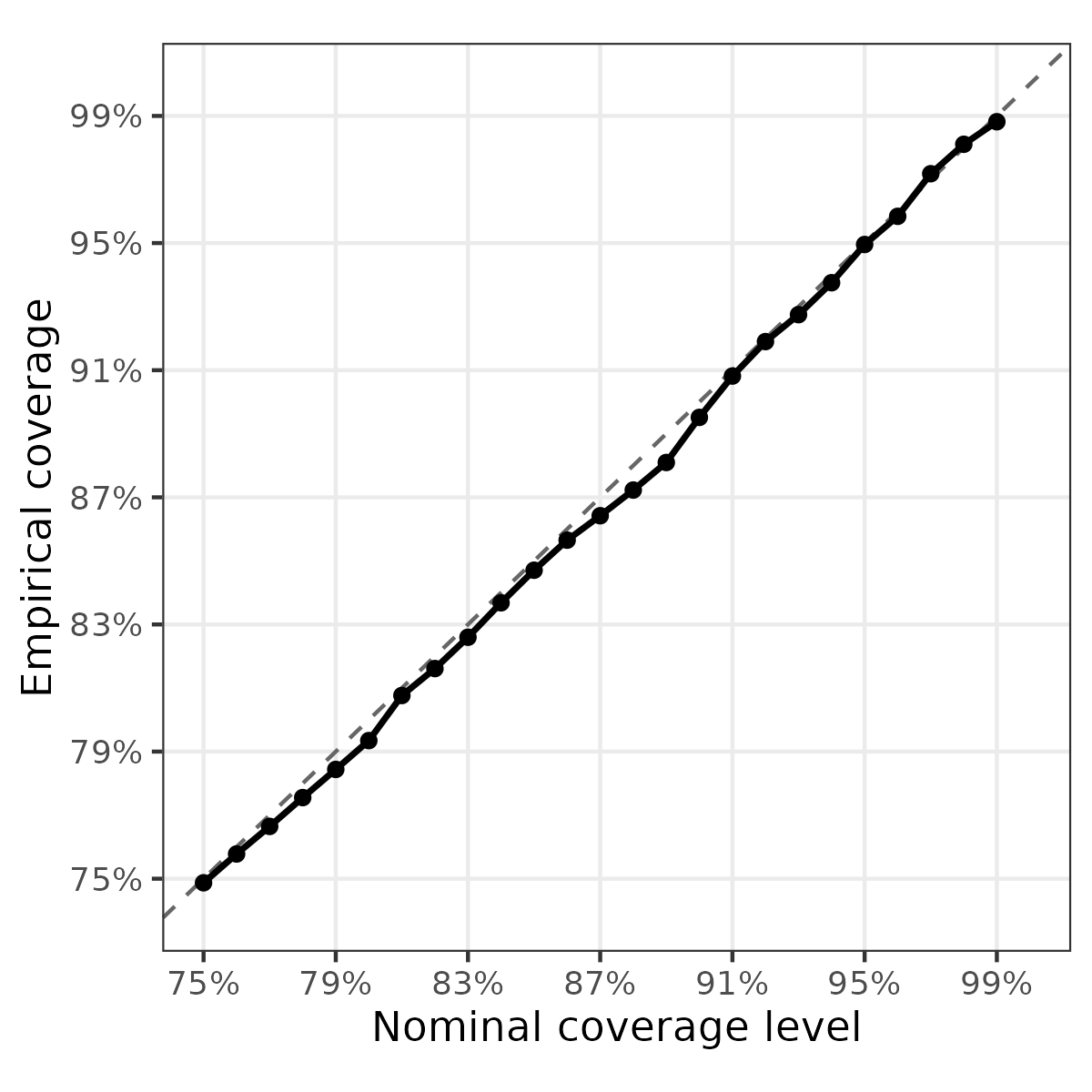}
\caption{Calibration plot for the projected random forest model for the wind direction data. Empirical coverage is shown as a function of the nominal coverage level, considering nominal levels from 75\% to 99\%. The dashed diagonal line represents perfect calibration.}
\label{fig:calibration-wind}
\end{figure}

% \color{blue}

Although the conformalization procedures discussed in the paper aim at guaranteeing marginal coverage in the sense of (\ref{eq:mvp}), we investigated in Table \ref{tab:conditional} the conditional coverages in the wind dataset with the out-of-bag conformal prediction using projected random forests defined in Algorithm \ref{algo:prf}. We binned each feature according to their four quartiles, checking the empirical coverages on the test set separately for each bin. The results show that conditional coverage stays close to the nominal level for this dataset. See \cite{dewolf2023} for a detailed discussion of this and related issues.

Since the projected random forest model is conformalized through the out-of-bag procedure, we also constructed a calibration plot for the wind dataset by comparing several nominal coverage levels with the corresponding empirical coverages observed on the test set. The results in Figure \ref{fig:calibration-wind} show good agreement between nominal levels and empirical coverage for this dataset.

% \color{black}

\begin{table}[t]
\centering
\caption{Conditional coverages for the wind direction dataset.}
\medskip
\label{tab:conditional}
\small
\begin{tabular}{llr}
\toprule
Variable & Bin & Empirical Coverage \\
\midrule
Air temperature & Q1 & 0.905 \\
                & Q2 & 0.874 \\
                & Q3 & 0.905 \\
                & Q4 & 0.897 \\
\midrule
Dew point temperature & Q1 & 0.907 \\
                      & Q2 & 0.899 \\
                      & Q3 & 0.888 \\
                      & Q4 & 0.885 \\
\midrule
Humidity & Q1 & 0.907 \\
         & Q2 & 0.900 \\
         & Q3 & 0.879 \\
         & Q4 & 0.894 \\
\midrule
Pressure & Q1 & 0.896 \\
         & Q2 & 0.878 \\
         & Q3 & 0.892 \\
         & Q4 & 0.914 \\
\midrule
Maximum gust & Q1 & 0.881 \\
             & Q2 & 0.871 \\
             & Q3 & 0.914 \\
             & Q4 & 0.917 \\
\midrule
Wind speed & Q1 & 0.876 \\
           & Q2 & 0.880 \\
           & Q3 & 0.911 \\
           & Q4 & 0.918 \\
\bottomrule
\end{tabular}
\end{table}

\section{Concluding remarks}\label{sec:concl}

% \color{blue}

One possibility for future work is to extend the methods developed in this paper to directional data problems, in which the response variables are observed as three-dimensional vectors. It would also be interesting to investigate circular extensions of calibration-set-free conformalization procedures connected to model stacking, such as the approach proposed in \cite{marquesf2025b}.

% \color{black}

Open source software coded in \texttt{R} \cite{R} and data for all the examples in the paper are available at:
\begin{center}
  \texttt{\small https://github.com/paulocmarquesf/circular}
\end{center}
We have two folders in this repository, named \texttt{synthetic} and \texttt{wind}, corresponding to the analyses of the respective datasets discussed in Section \ref{sec:experiments}. Inside each folder, a suffix on a script name identifies the method used to produce the prediction intervals. Suffixes \texttt{projected\_normal} and \texttt{circular\_forest} refer to implementations of split conformal prediction, using the projected normal linear model and the circular forest, respectively. The suffix \texttt{projected\_random\_forests} refers to applications of Algorithm \ref{algo:prf}.

\section*{Acknowledgements}

Paulo C. Marques F. receives support from FAPESP (Fundação de Amparo à Pesquisa do Estado de São Paulo) through project 2023/02538-0.

\appendix

\setcounter{equation}{0}
\renewcommand\theequation{A.\arabic{equation}}

\section*{Appendix A. Proof of Lemma \ref{lmm:order}}

Let $U_{(1)}\leq U_{(2)}\leq\dots\leq U_{(n)}\leq U_{(n+1)}$ denote the order statistics of the full set $\{U_1,U_2,\dots,U_n,U_{n+1}\}$. Considering that there may be ties, it follows from the definition of $U_{(k)}$ that at least $k$ of the $U_i$'s are less than or equal to $U_{(k)}$, for $k=1,\dots,n+1$. Hence, ${\sum_{i=1}^{n+1} I_{\{U_i\leq U_{(k)}\}}\geq k}$, almost surely. Taking expectations and observing that, by exchangeability, the probability ${P(U_i\leq U_{(k)})}$ is the same, for ${i=1,\dots,n+1}$, we have that ${P(U_{n+1}\leq U_{(k)})\geq k/(n+1)}$. Furthermore, for $k=1,\dots,n$, notice that $U_{n+1}>V_{(k)}$ if and only if $U_{n+1}>U_{(k)}$, because $U_{n+1}$ cannot be strictly larger than itself. Hence,
\[ {P(U_{n+1}\leq V_{(k)})=P(U_{n+1}\leq U_{(k)})}, \]
for $k=1,\dots,n$, and the result follows.

\section*{Appendix B. Out-of-bag conformal prediction}\label{appendix}

The experiments in Section \ref{sec:experiments} show that the out-of-bag conformal prediction sets exhibit empirical coverage close to the specified nominal level. Since this calibration-sample-free conformalization procedure, introduced in \cite{johansson2014}, does not share the same formal coverage property (\ref{eq:mvp}) of split conformal prediction, this empirical performance motivates the search for an adequate theoretical basis. To simplify the discussion and to emphasize the generality of the argument, we frame our exposition in terms of standard regression problems with linear response, using the simplest possible conformity score. The argument extends directly to more general settings.

The loss of the coverage property (\ref{eq:mvp}) when we move from split conformal prediction to the out-of-bag procedure is due to the fact that we no longer have the same predictive model being used to compute each of the underlying conformity scores. In fact, the use of out-of-bag predictions to compute the training sample conformity scores implies that each score is computed using a different predictive model: the subforest for which the training sample unit stayed out-of-bag during the bootstrap process. This modification breaks the original distributional symmetry, making the conformity scores no longer exchangeable. Our strategy is to symmetrize the out-of-bag conformal prediction procedure by making two practically prohibitive changes. First, we consider an idealized random forest built from an exhaustive bootstrap process that includes all possible bootstrap samples. Second, we add the future sample unit to the training sample. If we are able to move from this idealized scenario to an actual random forest, controlling the distribution of some specific differences, then we can show that the out-of-bag conformal prediction sets satisfy a coverage property with a lower bound close to the one established for split conformal prediction in Section \ref{sec:score}. 

Suppose that the pairs $(X_1,Y_1),\dots,(X_n,Y_n),(X_{n+1},Y_{n+1})$ are exchangeable, with $X_i\in\mathbb{R}^d$ and $Y_i\in\mathbb{R}$. Consider an exhaustive bootstrap of this extended training sample, which includes the pair $(X_{n+1},Y_{n+1})$, producing all possible $\tilde{B}=(n+1)^{n+1}$ samples of size $n+1$ with replacement. Let $\tilde{\mathcal{O}}_i\subset\{1,\dots,\tilde{B}\}$ be the indices of the bootstrap samples for which the $i$-th sample unit was not included (stayed ``out-of-bag''), for $i=1,\dots,n+1$. Note that $|\tilde{\mathcal{O}}_i|=n^{n+1}$ and $|\tilde{\mathcal{O}}_i|/\tilde{B}\to e^{-1}=0.3678\dots$. A regression tree is built from each bootstrap sample. Since Breiman \cite{breiman2001} introduced a random selection of the available explanatory variables as candidates to decide each tree branch split, we set the random seed to an integer obtained by applying any symmetric hashing function to the bootstrap sample before training each regression tree, in order to maintain the symmetry of the whole procedure. This process results in the idealized random forest $\widetilde{\texttt{RF}}=\{\tilde{\mu}^{(j)}\}_{j=1}^{\tilde{B}}$.

Defining the idealized out-of-bag conformity scores
\begin{equation*}
  \tilde{R}_i = \Bigg|Y_i -  \frac{1}{|\tilde{\mathcal{O}}_i}| \sum_{j\in\tilde{\mathcal{O}}_i} \tilde{\mu}^{(j)}(X_i)\Bigg|
\end{equation*}
for $i=1,\dots,n+1$, if we specify a nominal miscoverage level $0<\alpha<1$, such that ${\lceil(1-\alpha)(n+1)\rceil\leq n}$, and define $\tilde{r}=\tilde{R}_{(\lceil(1-\alpha)(n+1)\rceil)}$, the data exchangeability assumption and the completely symmetric process used to define the idealized random forest $\widetilde{\texttt{RF}}$ yield the following result, by applying the same combinatorial reasoning used to prove property (\ref{eq:mvp}) in Section \ref{sec:score}.

\begin{lemma}\label{lmm:exch}
The random vector $(\tilde{R}_1,\dots,\tilde{R}_n,\tilde{R}_{n+1})$ is exchangeable and $P(\tilde{R}_{n+1}\leq\tilde{r})\geq 1 - \alpha$.
\end{lemma}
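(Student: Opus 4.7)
The plan is to reduce Lemma \ref{lmm:exch} to the same combinatorial order-statistics argument that established property (\ref{eq:uniformity}) in Section \ref{sec:score}, once the joint exchangeability of $(\tilde{R}_1,\dots,\tilde{R}_n,\tilde{R}_{n+1})$ is in place. Given that exchangeability, the coverage bound follows mechanically by writing $\sum_{i=1}^{n+1} I_{\{\tilde{R}_i\leq\tilde{R}_{(k)}\}}\geq k$, taking expectations, invoking symmetry to conclude $P(\tilde{R}_{n+1}\leq\tilde{R}_{(k)})\geq k/(n+1)$, and finally choosing $k=\lceil(1-\alpha)(n+1)\rceil$, which is admissible by the hypothesis on $\alpha$ and yields $P(\tilde{R}_{n+1}\leq\tilde{r})\geq 1-\alpha$.

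The substantive step, therefore, is establishing exchangeability. To this end, I would encode the full idealized procedure as a deterministic map $\Phi$ acting on the extended sample $D=((X_1,Y_1),\dots,(X_{n+1},Y_{n+1}))$ and returning the vector $(\tilde{R}_1(D),\dots,\tilde{R}_{n+1}(D))$, and verify that $\Phi$ is equivariant under arbitrary permutations of the sample indices, in the sense that $\tilde{R}_i(D_\pi)=\tilde{R}_{\pi(i)}(D)$, where $D_\pi$ denotes $D$ reindexed by $\pi$. Once equivariance holds, combining it with the exchangeability assumption on $D$ gives $(\tilde{R}_1,\dots,\tilde{R}_{n+1})\stackrel{d}{=}(\tilde{R}_{\pi(1)},\dots,\tilde{R}_{\pi(n+1)})$ for every permutation $\pi$, which is the claim.

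Verifying equivariance requires checking three components of the construction. First, the exhaustive bootstrap enumerates every multi-index in $\{1,\dots,n+1\}^{n+1}$ exactly once, so the collection of $\tilde{B}=(n+1)^{n+1}$ bootstrap samples is invariant, as a set, under any global relabeling of $D$. Second, the symmetric hashing step forces the random seed used to grow the tree on a given bootstrap sample to depend only on the resulting multi-set of data points, so the tree-growing subroutine, including Breiman's random feature selection at each split, commutes with relabelings. Third, the out-of-bag index set $\tilde{\mathcal{O}}_i$ gets transported to $\tilde{\mathcal{O}}_{\pi(i)}$ under the relabeling, after which the symmetric averaging and the absolute value in the definition of $\tilde{R}_i$ preserve the identity.

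The main obstacle is precisely the bookkeeping around the randomness in tree growing: one must argue that every source of auxiliary randomness, namely the bootstrap indices and the feature subset selection at each split, is invariant in distribution under relabeling of the sample. This is the sole reason the idealized exhaustive bootstrap and the symmetric seed hashing were introduced, and it is exactly where the idealization departs from an implementable random forest. The argument itself, once those two ingredients are granted, is no harder than the one in Section \ref{sec:score}; the heuristic payoff of the lemma is that it isolates which features of the procedure need to be approximated in order for out-of-bag conformal prediction to behave close to the split conformal benchmark.
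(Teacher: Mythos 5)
Your proposal is correct and follows essentially the same route as the paper, which justifies the lemma in one sentence by appealing to the symmetry of the exhaustive bootstrap and hashed-seed construction together with the combinatorial order-statistics argument of Section \ref{sec:score}; your contribution is simply to make the permutation-equivariance verification explicit. No gaps.
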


We now move to an actual random forest $\texttt{RF}=\{\hat{\mu}^{(j)}\}_{j=1}^{B}$, built from a usual bootstrap process involving a manageable number of $B$ samples of size $n$ obtained with replacement from the training sample $(X_1,Y_1),\dots,(X_n,Y_n)$, which no longer includes the future pair $(X_{n+1},Y_{n+1})$. Let $\mathcal{O}_i\subset\{1,\dots,B\}$ be the indices of the bootstrap samples for which the $i$-th sample unit was not included, for $i=1,\dots,n$. Define the out-of-bag training sample conformity scores
\begin{equation*}
  R_i = \Bigg|Y_i -  \frac{1}{|\mathcal{O}_i|} \sum_{j\in\mathcal{O}_i} \hat{\mu}^{(j)}(X_i)\Bigg|,
\end{equation*}
for $i=1,\dots,n$, and the conformity score for the future sample unit
\begin{equation*}
  R_{n+1} = \Bigg|Y_{n+1} -  \frac{1}{B} \sum_{j=1}^B \hat{\mu}^{(j)}(X_{n+1})\Bigg|.
\end{equation*}

Defining $\hat{r}=R_{(\lceil(1-\alpha)(n+1)\rceil)}$, the key idea is that if $n$ and $B$ are large enough, the stability of the random forest algorithm would allow us to move from the idealized random forest $\widetilde{\texttt{RF}}$ to the actual random forest $\texttt{RF}$, controlling the probability that $R_{n+1}$ and $\tilde{R}_{n+1}$, as well as $\hat{r}$ and $\tilde{r}$, differ too much. The following result formalizes this idea.

\begin{theorem}\label{thm:approx}
If for every $\epsilon>0$ there exists a $\delta=\delta(\epsilon)>0$, such that
\[
  P\left(\max\left\{|R_{n+1}-\tilde{R}_{n+1}|,|\hat{r}-\tilde{r}|\right\}<\epsilon/2\right)\geq 1-\delta, 
\]
then
\[
  P(R_{n+1}\leq \hat{r})\geq 1 - \alpha - \delta - h(\epsilon),
\]
in which $h(\epsilon) = P(\tilde{r}-\epsilon<\tilde{R}_{n+1}\leq\tilde{r})$. Furthermore, if the joint distribution of the idealized conformity scores is absolutely continuous, then $\lim_{\epsilon\downarrow 0} h(\epsilon)=0$.
\end{theorem}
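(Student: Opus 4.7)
The plan is to transfer the coverage guarantee of Lemma~\ref{lmm:exch} from the idealized random forest to the actual one by an $\varepsilon$-wiggle argument on the stability event furnished by the hypothesis. First I would introduce the good event $A_\epsilon=\{\max\{|R_{n+1}-\tilde{R}_{n+1}|,|\hat{r}-\tilde{r}|\}<\epsilon/2\}$, noting that $P(A_\epsilon^c)\leq\delta$. The key geometric observation is that on $A_\epsilon$ the triangle inequality yields
\begin{equation*}
\tilde{R}_{n+1}\leq\tilde{r}-\epsilon\;\Longrightarrow\; R_{n+1}<\tilde{R}_{n+1}+\tfrac{\epsilon}{2}\leq\tilde{r}-\tfrac{\epsilon}{2}<\hat{r},
\end{equation*}
where the last strict inequality uses $\tilde{r}-\hat{r}<\epsilon/2$. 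Hence $\{\tilde{R}_{n+1}\leq\tilde{r}-\epsilon\}\cap A_\epsilon\subseteq\{R_{n+1}\leq\hat{r}\}$.

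Next I would chain the elementary bound $P(B\cap C)\geq P(B)-P(C^c)$ with the definitional decomposition $P(\tilde{R}_{n+1}\leq\tilde{r})=P(\tilde{R}_{n+1}\leq\tilde{r}-\epsilon)+h(\epsilon)$ and invoke Lemma~\ref{lmm:exch} to lower-bound $P(\tilde{R}_{n+1}\leq\tilde{r})$ by $1-\alpha$. Together these produce the target inequality $P(R_{n+1}\leq\hat{r})\geq 1-\alpha-\delta-h(\epsilon)$.

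For the final limit claim, continuity of probability from above applied to the decreasing family $\{\tilde{r}-\epsilon<\tilde{R}_{n+1}\leq\tilde{r}\}$, whose intersection is $\{\tilde{R}_{n+1}=\tilde{r}\}$, gives $h(\epsilon)\downarrow P(\tilde{R}_{n+1}=\tilde{r})$. Since $\tilde{r}$ is an order statistic of the training scores $\tilde{R}_1,\dots,\tilde{R}_n$, a union bound dominates this probability by $\sum_{i=1}^n P(\tilde{R}_{n+1}=\tilde{R}_i)$, each term of which vanishes when the joint law of $(\tilde{R}_1,\dots,\tilde{R}_{n+1})$ admits a density.

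The main obstacle I anticipate is conceptual rather than technical: the argument requires reading Lemma~\ref{lmm:exch} under the convention that $\tilde{r}$ is built from only the $n$ training-sample scores (matching the practical computation of $\hat{r}$); were $\tilde{R}_{n+1}$ included in the defining order statistic, exchangeability would force $P(\tilde{R}_{n+1}=\tilde{r})=1/(n+1)>0$ even under absolute continuity, preventing $h(\epsilon)$ from going to zero. Once that convention is fixed, the triangle-inequality inclusion and the union-bound decomposition are essentially mechanical.
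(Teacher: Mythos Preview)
Your argument is correct and mirrors the paper's proof almost step for step: the same stability event, the same triangle-inequality inclusion $\{\tilde{R}_{n+1}\leq\tilde{r}-\epsilon\}\cap A_\epsilon\subseteq\{R_{n+1}\leq\hat{r}\}$, and the same chain $P(B\cap C)\geq P(B)-P(C^c)$ combined with Lemma~\ref{lmm:exch}. Your handling of the limit claim is more explicit than the paper's (which simply declares it ``immediate''), and your reading of the convention for $\tilde{r}$---as the order statistic of the $n$ training-sample idealized scores, in parallel with the practical $\hat{r}$---is the intended one, so the obstacle you flag does not arise.
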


\begin{proof}
Define $B_\epsilon=\left\{\max\left\{|R_{n+1}-\tilde{R}_{n+1}|,|\hat{r}-\tilde{r}|\right\}<\epsilon/2\right\}$. By assumption, $P(B_\epsilon^c)\leq\delta$. On the event $B_\epsilon$, the actual future score and the actual threshold are both within $\epsilon/2$ of their idealized counterparts. Therefore, if also $\tilde{R}_{n+1}\leq\tilde r-\epsilon$, then
\[
  R_{n+1}<\tilde{R}_{n+1}+\epsilon/2\leq\tilde r-\epsilon/2
  \qquad\text{and}\qquad
  \hat r>\tilde r-\epsilon/2,
\]
so that $R_{n+1}<\hat r$. Hence
\[
  \{\tilde{R}_{n+1}\leq\tilde r-\epsilon\}\cap B_\epsilon
  \subseteq
  \{R_{n+1}\leq\hat r\}.
\]
Taking probabilities and using $P(A\cap B)\geq P(A)-P(B^c)$, we obtain
\[
  P(R_{n+1}\leq\hat r)
  \geq
  P(\tilde{R}_{n+1}\leq\tilde r-\epsilon)-P(B_\epsilon^c)
  \geq
  P(\tilde{R}_{n+1}\leq\tilde r-\epsilon)-\delta.
\]
Now,
\[
  P(\tilde{R}_{n+1}\leq\tilde r-\epsilon)
  =
  P(\tilde{R}_{n+1}\leq\tilde r)
  -
  P(\tilde r-\epsilon<\tilde{R}_{n+1}\leq\tilde r).
\]
By Lemma \ref{lmm:exch}, $P(\tilde{R}_{n+1}\leq\tilde r)\geq 1-\alpha$. Therefore,
\[
  P(R_{n+1}\leq\hat r)
  \geq
  1-\alpha-\delta-P(\tilde r-\epsilon<\tilde{R}_{n+1}\leq\tilde r)
  =
  1-\alpha-\delta-h(\epsilon),
\]
which proves the first claim. For the second claim, the events $\{\tilde r-\epsilon<\tilde{R}_{n+1}\leq\tilde r\}$ decrease, as $\epsilon\downarrow 0$, to $\{\tilde{R}_{n+1}=\tilde r\}$. If the joint distribution of the idealized conformity scores is absolutely continuous, then ties occur with probability zero, so $P(\tilde{R}_{n+1}=\tilde r)=0$. By continuity of the probability measure for decreasing events, $\lim_{\epsilon\downarrow 0}h(\epsilon)=0$.
\end{proof}

Recalling from the discussion in Section \ref{sec:cp} that the lower bound on the marginal coverage property (\ref{eq:mvp}) follows directly from the lower bound on the probability ${P(R_{n+1}\leq\hat{r})}$, we come to an understanding of why the empirical coverage of out-of-bag conformal prediction sets stays close to the specified nominal level in our experiments.

\bibliographystyle{elsarticle-num} 
\bibliography{bibliography}

\end{document}